\newtheorem{definition}{Definition}[section]
\newtheorem{proposition}{Proposition}[section]
\newtheorem{theorem}{Theorem}[section]
\newcommand{\lo}{\textit{\textbf{l}}_R}
\newcommand{\up}{\textit{\textbf{u}}_R}
\newcommand{\low}{\textit{\textbf{l}}}
\newcommand{\upp}{\textit{\textbf{u}}}
\newcommand{\ut}{\mathtt{u}_R}
\begin{document}

	\begin{center}
		\doublespacing{\textbf{\huge{On a Well-behaved Relational Generalisation of Rough Set Approximations}}}	\\
		
		\vspace{0.8cm}
		Alexa Gopaulsingh\\
		
		\vspace{1mm}
	\singlespacing{	Central European University\\
		Budapest, Hungary\\
		\ttfamily{\bfseries{gopaulsingh\_alexa@phd.ceu.edu}}}\\
		\end{center} 
		
\vspace{1cm}

\noindent \textbf{Abstract.} We examine  non-dual relational extensions of rough set approximations and find an extension which satisfies surprisingly many of the usual rough set properties. We then use this definition to give an explanation for an observation made by Samanta and Chakraborty in their recent paper [P. Samanta and M.K. Chakraborty. Interface of rough set systems and modal logics: A survey. \textit{Transactions on Rough Sets XIX}, pages 114-137, 2015].

\vspace{4mm} \noindent \textbf{Keywords:} Relational generalisation $\cdot$ Relational extension $\cdot$ Non-dual $\cdot$ Rough Set Theory
	
	\section{Relational Generalisations}
	Given the definition of rough sets, it is natural to ask  what happens if we relax the equivalence relation to an arbitrary relation or  other special relation. This has been well studied in the literature, see \cite{M1, R2, R3, R4, R5, R6, R7, R8}. In Section 3 we will recall a standard relational generalisation and compare properties which special relations satisfy under this generalisation. We  will use a table similar to one in \cite{PandC} and from it, one can obtain three characterising properties of  reflexive relations, symmetric relations and transitive relations each. In \cite{R8}, Yao did work which, though not directly stated, essentially implies these three propositions using operator theory. So, here we mention them so they can be seen directly in this simple form and also for comparison purposes with  Section 4. Then we give brief, direct proofs which does not use operator terminology. 
	
	Next, we will observe that when generalising from rough sets based on equivalence relations to general relations, in the literature one often (though not only) sees either predecessor sets for both approximations or more commonly successor sets for both approximations. The approximation properties with predecessor and successor sets has been combinatorically mentioned in \cite{CH1} and well as \cite{R7}. In \cite{R7}, Yao also groups lower and upper approximations together which uses both predecessor sets each or successor sets each or which uses both each. He made these pairs most probably because they form dual operators as such. Non-dual systems are considered in the literature in for example, \cite{PandC}. We continue to go in this direction. In Section 2.2, we show that if we are willing to give up duality, and combine a lower approximation which uses successor sets and an upper approximation which uses predecessor sets, we can find a generalisation which pretty much satisfies everything else except for duality,  for pre-order (reflexive and transitive) relations. Here too, we construct a table and find a characterising property of transitive relations. From comparison between the two tables, we will see that none of the rough set generalisations along the special relations examined in the first table, satisfies so many properties as the pre-order generalisation of the non-dual extension in the second table. We believe that this makes this form of generalisation, and more specially the pre-order case of it, worthy of further consideration. In this direction, we will give two uses of this generalisation. One is that a covering operator mentioned in \cite{PandC} is a special case of this operator, which explains why Samanta and Chakraborty noticed that it satisfied so many rough set properties and they remarked that it deserved to be investigated further. Here, we see the reason for that well-behaved operator by placing it in this more general context. Lastly we, will give an example of a logical interpretation of this operator which indicates that results about it can be nicely transferred to other areas.

		Rough set theory has tremendous practical applications. This comes from the calculation of reducts and decision rules for data. The data is mined to extract decision rules of manageable size (i.e. attribute reduction) so predictions can be made. It has been argued that rough set theory can be used to make decisions on the  data in the absence of major prior assumptions in \cite{Baye}.  From this perspective, it is perhaps not so surprising that this leads to an explosion of applications. Hence rough set analysis adds to the tools of Bayes' Theorem and regression analysis for feature selection and pattern recognition in data mining  \cite{DM4, DM1, DM2, DM3, DM5, DM6, DM7, DM8}.  Applications include in medical databases \cite{MD1, MD2, MD3, MD4, MD5, MD6, MD7}, artificial intelligence and machine learning \cite{AC1, AC2, AC3, AC4, AC5, AC6, AC7},  engineering  \cite{EN1, EN2, EN3, EN4, EN5} and cognitive science  \cite{CG1, CG2, CG3, CG4, CG5}. In fact, it has been argued by Yao in \cite{TSid} that there is an imbalance in the literature of rough sets between the conceptual development of the theory and the practical computational development. Here, he claims that the computational literature now far outweigh the conceptual theoretical literature and that it would be useful for the field if this imbalance were somewhat corrected. He started his suggestion in \cite{TSid} where he gave a conceptual example of reducts which unify three different looking reduct definitions used in the literature. We agree that more work of this type would be  helpful in organising and making a more coherent map of the huge mass of rough set literature which is present. In this direction, in subsection 4.1.2 we briefly state an interpreted logical connection for the well-behaved relational extension of rough set approximations examined in section 4. 
	
	\section{Pawlak's Rough Sets}  
	We go over the definition of rough set approximations. These definitions and basic notions  can be found in   \cite{PZ}.  Let $V$ be a set and $E$ be an equivalence relation on $V$. Let $V/E$ denote the set of equivalence classes of $E$. A set $X \subseteq V$ is said to be \textit{E-exact} if it is equal to a union of some of the equivalence classes of $E$. If $X$ cannot be represented in this way, it is said to be \textit{E-inexact} or \textit{E-rough} or simply \textit{rough} if the equivalence relation under consideration is clear. In this case, we may approximate it with two exact sets, the lower and upper  approximations respectively as defined below:
	\begin{center} 
		$ \low_E (X)	 = \{ x \in V\ | \ [x]_E \subseteq X \} $, 
	\end{center}
	\begin{equation} \label{eq:1}
	\upp_E (X) = \{  x\in V \ | \ [x]_E  \cap X \neq \emptyset \}. 
	\end{equation}
	Equivalently, instead of a pointwise definition we may use a granule based definition:
	\begin{center}
		$ \low_E (X)= \bigcup \{ Y\subseteq V/E \ | \ Y \subseteq X  \}, $
	\end{center}
	\begin{equation} \label{eq:2}
	\upp_E (X)= \bigcup \{  Y \subseteq V/E \ | \ Y \cap X \neq \emptyset  \}. 
	\end{equation}

	The pair $(V, E)$ is known as an \textit{approximation space}. 
	
	Many times, several equivalence relations are considered over on set. A \textit{knowledge base}, $K = (V, \mathscr{E} )$ is defined with $\mathscr{E} $ being a family of equivalence relation over $V$. If $ \mathscr{P} \subseteq \mathscr{E} $, the $\bigcap \mathscr{P}$ is an equivalence relation as well. The intersection of all equivalence relations belonging to $\mathscr{P}$ is denoted by $IND(\mathscr{P}) = \bigcap \mathscr{P}$. This is known as the \textit{indiscernibility relation} over $\mathscr{P}$.\\  For further basic concepts of rough set theory, see \cite{PZ}.

	\subsection{List of properties satisfied by Rough Sets based \\ on Equivalence Relations} 
	
	In Pawlak's book, see \cite{PZ}, he lists these properties of rough sets based on equivalence relations which we repeat here. Let $V$ be the domain of discourse and $X, Y \subseteq V$. Then, the following holds:\\

	\begin{onehalfspace}
		\noindent $1) \low_E (X) \subseteq X \subseteq \upp_E (X),$
		
		\vspace{2mm}
		\noindent $2) \low_E (\emptyset) = \upp_E (\emptyset) = \emptyset; \quad \low_E (V) = \upp_E (V)= V,$
		
		\vspace{2mm}
		\noindent $ 3) \upp_E (X \cup Y) = \upp_E (X) \cup \upp_E (Y),$
		
		\vspace{2mm}
		\noindent $ 4) \low_E (X \cap Y) = \low_E (X) \cap \low_E (Y),$
		
		\vspace{2mm}
		\noindent $ 5)  X \subseteq Y \Rightarrow \low_E (X) \subseteq \low_E (Y),$
		
		\vspace{2mm}
		\noindent $ 6)  X \subseteq Y \Rightarrow \upp_E (X) \subseteq \upp_E (Y),$
		
		\vspace{2mm}
		\noindent $ 7) \low_E (X\cup Y) \supseteq \low_E (X) \cup \low_E (Y),$
		
		\vspace{2mm}
		\noindent $ 8) \upp_E (X\cap Y) \supseteq \upp_E (X) \cap \upp_E (Y),$
		
		\vspace{2mm}
		\noindent $ 9) \low_E (-X) = -\upp_E (X),$
		
		\vspace{2mm}
		\noindent $ 10) \upp_E (-X) = -\low_E (X),$
		
		\vspace{2mm}
		\noindent $ 11) \low_E (\low_E (X)) = \upp_E (\low_E (X)) = \low_E (X),$
		
		\vspace{2mm}
		\noindent $ 12) \upp_E (\upp_E (X)) = \low_E (\upp_E (X)) = \upp_E (X).$
	\end{onehalfspace}

	\section{Standard Dual Relational Generalisation of \\ Rough Set Approximations}
	 
	\vspace{2mm}
	We will use a set-up similar to that of Yao, in \cite{GRSM}. Here, Yao defined  a relational generalisation of rough set approximation as follows: Let $R$  be a binary relation on a set $V$ i.e. $ R \subseteq V\times V.$ First, we state the notion of a \emph{successor neighbourhood} of an element $x \in V,$ $R_s(x)$:     
	\begin{equation} 
	R_s(x) = \{ y \in V \ | \ xRy \} 
	\end{equation}
	This was then used to define the corresponding notion of lower and upper approximation operators as below.  We will use similar notation as in Section 1.2 but note that the subscript here can be any relation not just an equivalence relation. For $X \subseteq V$ we have:

	\begin{center}
		$\lo (X) = \{ x\ | \ R_s(x) \subseteq X \} $
	\end{center}
	\begin{equation}
	\up (X)= \{ x \ | \ R_s(x) \cap X \neq \emptyset  \}
	\end{equation}

	In the Table 1, we have enlisted the properties that different special relations may satisfy. A similar such table was given in \cite{PandC} and we use it here  for comparison with  the non-dual relational generalisation examined in the following section. This is a  table of properties versus different types of relations. A box is marked with a tick if all relations of the type corresponding to its column satisfies the property stated in its row and is marked with a cross otherwise. Different properties follow for different special relations. Let $r$, $s$, $t$, be subscripts which denote when a relation is reflexive, symmetric and transitive respectively and their combinations denote the conjunction of these properties. Let the subscript $ser$ denote a serial relation i.e. a relation in which every element has a successor.

	\begin{table}
		
		\begin{tabular}{| l | l | l | l | l | l | l | l | l |l |}
			\hline
			\hfill & $R$ &$R_r$ & $R_s$ & $R_t$ & $R_{rs}$ & $R_{rt}$ & $R_{st}$ & $R_{rst}$ &$R_{ser}$ \\ \hline
			1. Duality of $\lo (X)$, $\up (X)$ & \ding{51} & \ding{51} & \ding{51} & \ding{51}  & \ding{51} & \ding{51}  & \ding{51} & \ding{51} & \ding{51} \\ \hline
			
			2. $\lo (\emptyset)$ = $\emptyset$&\ding{53} & \ding{51} & \ding{53} & \ding{53}  & \ding{51} & \ding{51}  & \ding{53} & \ding{51} & \ding{51} \\ \hline
			
			3. $\emptyset$ = $\up (\emptyset) $ &\ding{51} & \ding{51} & \ding{51} & \ding{51}  & \ding{51} & \ding{51}  & \ding{51} & \ding{51} & \ding{51} \\ \hline
			
			4. $\lo (V) = V$ &\ding{51} & \ding{51} & \ding{51} & \ding{51}  & \ding{51} & \ding{51}  & \ding{51} & \ding{51} & \ding{51} \\ \hline
			
			5. $\up (V) = V$&\ding{53} & \ding{51} & \ding{53} & \ding{53}  & \ding{51} & \ding{51}  & \ding{53} & \ding{51} & \ding{51}  \\ \hline

			6. $\lo (X) \subseteq X$  &\ding{53} & \ding{51} & \ding{53} & \ding{53}  & \ding{51} & \ding{51}  & \ding{53} & \ding{51} & \ding{53}  \\ \hline
			
			7. $X\subseteq \up (X) $&\ding{53} & \ding{51} & \ding{53} & \ding{53}  & \ding{51} & \ding{51}  & \ding{53} & \ding{51} & \ding{53}  \\ \hline

			8. $ X \subseteq Y \Rightarrow \lo (X) \subseteq \lo (Y)$&\ding{51} & \ding{51} & \ding{51} & \ding{51}  & \ding{51} & \ding{51}  & \ding{51} & \ding{51} & \ding{51} 
			\\ \hline
			
			9. $ X \subseteq Y \Rightarrow \up (X)\subseteq \up (Y)$&\ding{51} & \ding{51} & \ding{51} & \ding{51}  & \ding{51} & \ding{51}  & \ding{51} & \ding{51} & \ding{51} 
			\\ \hline
			
			10. $\up (X\cup Y) = \up (X) \cup \up (Y)$ &\ding{51} & \ding{51} & \ding{51} & \ding{51}  & \ding{51} & \ding{51}  & \ding{51} & \ding{51} & \ding{51} \\ \hline 
			
			11. $\lo (X\cap Y) = \lo (X) \cap \lo (Y) $&\ding{51} & \ding{51} & \ding{51} & \ding{51}  & \ding{51} & \ding{51}  & \ding{51} & \ding{51} & \ding{51} \\ \hline
			
			12. $\lo (X\cup Y) \supseteq \lo (X)\cup \lo (Y) $ &\ding{51} & \ding{51} & \ding{51} & \ding{51}  & \ding{51} & \ding{51}  & \ding{51} & \ding{51} & \ding{51} 
			\\ \hline
			
			13. $\up (X\cap Y) \supseteq \up (X) \cap \up (Y) $ &\ding{51} & \ding{51} & \ding{51} & \ding{51}  & \ding{51} & \ding{51}  & \ding{51} & \ding{51} & \ding{51} 
			\\ \hline

			14. $ \lo (\lo (X)) \subseteq \lo (X) $ &\ding{53} & \ding{51} & \ding{53} & \ding{53}  & \ding{51} & \ding{51}  & \ding{53} & \ding{51} & \ding{53}  \\ \hline
			
			15. $ \lo (\lo (X)) \supseteq  \lo (X) $ &\ding{53} & \ding{53} & \ding{53} & \ding{53}  & \ding{53} & \ding{51}  & \ding{53} & \ding{51} & \ding{53} 
			\\ \hline

			16. $ \up (\lo (X)) \subseteq \lo (X) $ &\ding{53} & \ding{53} & \ding{53} & \ding{53}  & \ding{53} & \ding{53}  & \ding{53} & \ding{51} & \ding{53} 
			\\ \hline
			
			17. $ \up (\lo (X)) \supseteq \lo (X) $ &\ding{53} & \ding{51} & \ding{53} & \ding{53}  & \ding{51} & \ding{51}  & \ding{53} & \ding{51} & \ding{53} \\ \hline

			18. $ \up (\up (X)) \subseteq  \up (X) $ &\ding{53} & \ding{53} & \ding{53} & \ding{51}  & \ding{53} & \ding{51}  & \ding{51} & \ding{51} & \ding{53} \\ \hline
			
			19. $ \up (\up (X)) \supseteq \up (X) $ &\ding{53} & \ding{51} & \ding{53} & \ding{53}  & \ding{51} & \ding{51}  & \ding{53} & \ding{51} & \ding{53} \\ \hline

			20. $ \lo (\up (X)) \subseteq  \up (X) $ &\ding{53} & \ding{51} & \ding{53} & \ding{53}  & \ding{51} & \ding{51}  & \ding{53} & \ding{51} & \ding{53}  \\ \hline

			21. $\up (X) \subseteq \lo (\up (X))$   &\ding{53} & \ding{53} & \ding{53} & \ding{53}  & \ding{53} & \ding{53}  & \ding{53} & \ding{51} & \ding{53}\\ \hline
			
			22. $ X \subseteq \lo (\up (X))$  &\ding{53} & \ding{53} & \ding{51} & \ding{53}  & \ding{51} & \ding{53}  & \ding{51} & \ding{51} & \ding{53} \\ \hline

			23. $\up (\lo (X))\subseteq X$  &\ding{53} & \ding{53} & \ding{51} & \ding{53}  & \ding{51} & \ding{53}  & \ding{51} & \ding{51} & \ding{53}\\ \hline

		\end{tabular} 
		
		\caption{Properties satisfied by the general approximation operators for different special relations }
		\label{table:table1} 
	\end{table}
	
	We note that if a property is satisfied by any general relation, i.e. there is a tick in the first column, then the full row corresponding to that property is ticked. Also, if for example, if some property is satisfied by a reflexive relation i.e.  $R_r$ ticked then we can immediately deduce that $R_{rs}, \ R_{rt},$ and $R_{rst}$ should be ticked. Similarly, for other special relations. So, often only the first few boxes of a row needs to be figured out before the whole row can be deduced. For example, consider the case of $\up (\up (X)) \subseteq \up (X)$ in the 18th row  and the  $R_t$ column, i.e for a transitive relation. We now briefly prove this.  Suppose that $x \in \up (\up (X)),$ i.e. $R_s(x) \cap \up (X) \neq \emptyset.$ Let $v$ be in this intersection. Then $v \in R_s(x)$ and  $R_s(v) \cap X \neq \emptyset.$ So let $t \in R_s(v) \cap X.$ Since $R$ is transitive then we also have that $ t\in R_s(x).$ Hence, $t\in R_s(x) \cap X$ and  $R_s(x) \cap X \neq \emptyset.$ Therefore, $ x \in \up (X)$ and $\up(\up(X)) \subseteq \up(X).$ It follows that boxes corresponding to  $R_t, \ R_{rt}, \ R_{st}$ and $R_{rst}$ are ticked.  Counter-example cases can be made for the boxes marked with a cross. We note that not all the rows are independent. For example, $\lo (\lo(X)) \subseteq \lo (X)$ in the 14th row is a special case of $\lo (X) \subseteq  X$ in the 6th row. However, we wanted to include both sides of the idempotent  equation, consisting of rows 14 and 15. Similar considerations go for the rest of the table.
	
	Examining Table 1, we observe a few things. Duality, property 1. as well as properties, 2, 3, 8-13 hold for arbitrary relations. The table also hints at the upcoming 3 propositions. In rows 6, 23 and 18, we see properties which hold for reflexive (but not symmetric and transitive), symmetric (but not reflexive and transitive) and transitive (but not reflexive and symmetric relations) respectively. Forming the table helps to see what possibilities would be promising to try to see if it holds both ways and it can be seen that not only do properties 6, 18. and 23. imply that a relation is reflexive, symmetric and transitive respectively but that the converses hold as well.  In the \cite{R8} paper, these propositions can be deduced from examinations of algebraic operators. Here, we give brief direct proofs of them and in the next  section we will compare these results with what can be obtained for the case of the non-dual generalisation examined.

	\vspace{2mm}
	\begin{proposition} \label{p1}
		Let $V$ be a set and $R$ a relation on $V.$ Then  $ \lo(X) \subseteq X $ for all $ X \subseteq V $ iff $R$ is reflexive.
	\end{proposition}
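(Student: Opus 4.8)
The plan is to prove both implications of the biconditional separately, working directly from the definition $\lo(X) = \{x \mid R_s(x) \subseteq X\}$ together with $R_s(x) = \{y \in V \mid xRy\}$.

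For the easy direction, I would assume $R$ is reflexive and show $\lo(X) \subseteq X$ for an arbitrary $X \subseteq V$. Taking any $x \in \lo(X)$, the definition gives $R_s(x) \subseteq X$. Reflexivity yields $xRx$, so $x \in R_s(x)$, and hence $x \in X$. This settles the $\Leftarrow$ direction in one line and uses reflexivity in exactly the place one expects.

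The $\Rightarrow$ direction is where the only real choice has to be made, and I would argue it by contraposition. Suppose $R$ is \emph{not} reflexive, so there exists some $x \in V$ with $\neg(xRx)$, i.e.\ $x \notin R_s(x)$. The idea is to manufacture a single set $X$ that witnesses the failure of $\lo(X) \subseteq X$, and the natural candidate is $X = R_s(x)$ itself. With this choice $R_s(x) \subseteq X$ holds trivially, so $x \in \lo(X)$; yet $x \notin R_s(x) = X$. Thus $x \in \lo(X) \setminus X$, so $\lo(X) \not\subseteq X$, completing the contrapositive.

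The proof is short and the only mildly nonobvious point—what I would flag as the ``main obstacle,'' though it is minor—is hitting on the witness set $X = R_s(x)$ for the reverse direction; once that is chosen, everything reduces to reading off the definitions. It may be worth remarking that the same $X = R_s(x)$ construction is the template for the analogous characterisations of symmetry and transitivity announced after Table~1, so isolating it here pays off later.
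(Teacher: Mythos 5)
Your proof is correct and follows essentially the same route as the paper: the forward direction is handled by contraposition with the witness set $X = R_s(x)$, exactly as in the paper's argument, and you additionally spell out the easy $\Leftarrow$ direction that the paper dismisses as straightforward.
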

	
	\begin{proof}
		$\Leftarrow$ is straightforward so we prove the converse.	We prove it by the contrapositive. Suppose that $R$ is not reflexive. Then there exists a witness $x \in V$ such that $(x,x) \not\in R.$ Consider the set $Y= R_s(x).$ Now by definition $x \in \lo (Y)$ but by assumption $x \not\in Y.$ Hence  $ \lo (Y) \not\subseteq Y.$ 
	\end{proof} 
	
	\noindent \textbf{Remark 2.3.1} In \cite{R5}, Zhu also noted the above proposition for characterising approximations for reflexive relations as well as another proposition which characterises reflexive approximations using property 5. in  Table 1 instead of property 4.

	\begin{proposition} \label{p2}
		Let $V$ be a set and $R$ a relation on $V.$	Then $ \up  (\lo (X)) \subseteq X$ for all $X \subseteq V$ iff $R$ is symmetric. 
	\end{proposition}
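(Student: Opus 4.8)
The plan is to prove both implications, following the same template as Proposition~\ref{p1}. For the easy direction I would assume $R$ is symmetric and chase an arbitrary element through the two operators. Suppose $x \in \up(\lo(X))$. By definition this means $R_s(x) \cap \lo(X) \neq \emptyset$, so pick $y$ in this intersection; then $xRy$ and $R_s(y) \subseteq X$. The point is to convert membership in $\lo(X)$ back into membership in $X$: symmetry turns $xRy$ into $yRx$, i.e. $x \in R_s(y)$, and since $R_s(y) \subseteq X$ we conclude $x \in X$. This gives $\up(\lo(X)) \subseteq X$ with no further work.

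For the converse I would argue by the contrapositive, as in Proposition~\ref{p1}. Assume $R$ is not symmetric, so there is a witnessing pair $a, b \in V$ with $aRb$ but $(b,a) \notin R$. The task is to exhibit a single set $X$ for which $\up(\lo(X)) \not\subseteq X$. The key step --- and the one place where some thought is required --- is the choice of witness set; I would take $X = R_s(b)$, the successor neighbourhood of $b$.

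With this choice the verification is short. Since trivially $R_s(b) \subseteq R_s(b) = X$, we get $b \in \lo(X)$. Because $aRb$ we have $b \in R_s(a)$, so $b \in R_s(a) \cap \lo(X)$ and hence $a \in \up(\lo(X))$. On the other hand $a \notin X = R_s(b)$ precisely because $(b,a) \notin R$. Thus $a$ witnesses $\up(\lo(X)) \not\subseteq X$, completing the contrapositive.

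The substantive difficulty is really only in the converse, and it is localised entirely in guessing the correct $X$: once one sees that symmetry is exactly what lets $\lo$ and $\up$ ``cancel'', it becomes natural to build $X$ out of the successor set $R_s(b)$ of the endpoint $b$ of the asymmetric pair, so that $b$ lands in $\lo(X)$ while the missing back-arrow $(b,a)$ keeps $a$ out of $X$. I would also note in passing that the degenerate case $R_s(b) = \emptyset$ (hence $X = \emptyset$) causes no problem, since $b \in \lo(X)$ still holds vacuously and the same chain of deductions goes through unchanged.
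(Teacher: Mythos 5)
Your proposal is correct and follows the paper's proof essentially verbatim in the converse direction: the same contrapositive argument with the same witness set $X = R_s(b)$ (the paper's $Y = R_s(y)$), and the same chain of memberships. You additionally write out the forward direction that the paper dismisses as straightforward, and your reasoning there is also correct.
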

	
	\begin{proof}
		$\Leftarrow$ is straightforward so we prove the converse.	We prove it by the contrapositive. Suppose that $R$ is not symmetric. Then there exits witnesses $x, y \in V$ such that $(x,y) \in R$ but $(y,x) \not\in R.$	Consider the set $ Y = R_s(y).$ By definition we have that $ y \in \lo (Y)$. Since $(x,y) \in R$ then $x \in \up  (\lo (Y))$ and since $(y,x) \not\in R$ then $x$ is not in $Y.$ Therefore, $ \up (\lo (Y)) \not\subseteq Y.$ Hence the result. 
	\end{proof}

	\begin{proposition} \label{p3}
		Let $V$ be a set and $R$ a relation on $V.$ Then $ \up (\up (X)) \subseteq \up (X) $ for all $ X \subseteq V $ iff $R$ is transitive.	
	\end{proposition}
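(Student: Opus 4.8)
The plan is to prove both directions of this characterisation of transitivity, mirroring the structure of Propositions~\ref{p1} and~\ref{p2}. For the forward direction ($\Leftarrow$), I would note that it has already essentially been carried out in the discussion of row 18 of Table~1 above: assume $R$ is transitive and take any $x \in \up(\up(X))$. By definition $R_s(x) \cap \up(X) \neq \emptyset$, so pick $v$ in this intersection. Then $v \in R_s(x)$, meaning $xRv$, and $v \in \up(X)$, meaning $R_s(v) \cap X \neq \emptyset$, so pick $t \in R_s(v) \cap X$, i.e.\ $vRt$. Transitivity of $R$ gives $xRt$, hence $t \in R_s(x) \cap X$, so $R_s(x) \cap X \neq \emptyset$, which is exactly $x \in \up(X)$. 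This shows $\up(\up(X)) \subseteq \up(X)$ for every $X$.

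For the converse, I would again argue by contrapositive, as in the two preceding propositions. Suppose $R$ is not transitive, so there exist witnesses $x, y, z \in V$ with $(x,y) \in R$ and $(y,z) \in R$ but $(x,z) \notin R$. The natural candidate test set is the singleton $X = \{z\}$. With this choice, $\up(\{z\}) = \{ w \mid R_s(w) \cap \{z\} \neq \emptyset \} = \{ w \mid wRz \}$, the set of predecessors of $z$. Since $(y,z) \in R$ we have $y \in \up(\{z\})$, and since $(x,y) \in R$ with $y \in \up(\{z\})$ we get $x \in \up(\up(\{z\}))$. On the other hand $(x,z) \notin R$ means $x \notin \up(\{z\})$. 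Hence $x$ witnesses $\up(\up(\{z\})) \not\subseteq \up(\{z\})$, contradicting the assumed inclusion and completing the contrapositive.

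The only subtle point to get right is the choice of test set in the converse, so that the failure of transitivity is detected by exactly one application of $\up$. I expect this to be the main (though still minor) obstacle: one must pick $X$ so that $y$ lands in $\up(X)$ via the relation $(y,z)$, which forces $X$ to contain an element, here $z$, that is an $R$-successor of $y$; taking $X = \{z\}$ accomplishes this cleanly and keeps $\up(X)$ equal to the predecessor set of $z$, making both the membership $x \in \up(\up(X))$ and the non-membership $x \notin \up(X)$ immediate. With that choice fixed, the remaining verifications are routine unwindings of the definitions of $R_s$ and $\up$, exactly parallel to the proofs of Propositions~\ref{p1} and~\ref{p2}.
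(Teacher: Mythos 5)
Your proof is correct and follows essentially the same route as the paper: the $\Leftarrow$ direction is the transitivity argument already given in the discussion of row 18 of Table 1, and the converse is proved by contrapositive using the witnesses $x,y,z$ of non-transitivity together with the test set $\{z\}$, exactly as in the paper. No gaps.
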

	
	\begin{proof}
		$\Leftarrow$ is straightforward so we prove the converse.	We prove it by the contrapositive. Suppose that $R$ is not transitive. Then there exists witnesses $x, y$ and $z \in V$ such that $(x,y), (y,z) \in R$ but $(x,z) \not\in R.$ Consider the set $Z = \{z\}.$ Then $\up (Z)$ contains $y$ and hence $ \up (\up (Z)) $ contains $x$ but since $(x,z) \not\in R$, $x$ is not in $\up (Z).$ Hence $ \up (\up (Z) \not\subseteq \up (Z). $ The result follows.
	\end{proof}
	
	\newpage 
	
	\begin{theorem}
		Let R be a relation on a set V. For all $X \subseteq V$, then 
		
		\vspace{2mm}
		\noindent (i) $ \lo(X) \subseteq X $,\\
		(ii) $ \up (\lo (X)) \subseteq X$ and \\
		(iii)  $ \up (\up (X)) \subseteq \up (X). $ 
		
		\noindent all hold iff $R$ is an equivalence relation. 
		
	\end{theorem}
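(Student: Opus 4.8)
The plan is to observe that the three conditions listed in the theorem are exactly the three characterising properties already established in Propositions \ref{p1}, \ref{p2} and \ref{p3}, so that the theorem is obtained simply by conjoining them. Specifically, condition (i), $\lo(X) \subseteq X$ for all $X$, characterises reflexivity; condition (ii), $\up(\lo(X)) \subseteq X$ for all $X$, characterises symmetry; and condition (iii), $\up(\up(X)) \subseteq \up(X)$ for all $X$, characterises transitivity. Since an equivalence relation is precisely a relation that is reflexive, symmetric and transitive, the biconditional should follow at once from the three biconditionals we already have.

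For the direction $(\Leftarrow)$, I would assume $R$ is an equivalence relation, hence reflexive, symmetric and transitive. Applying the ``if'' direction of Proposition \ref{p1} yields (i), of Proposition \ref{p2} yields (ii), and of Proposition \ref{p3} yields (iii). For the direction $(\Rightarrow)$, I would assume (i), (ii) and (iii) all hold, and then apply the ``only if'' direction of Proposition \ref{p1} to (i) to obtain reflexivity, of Proposition \ref{p2} to (ii) to obtain symmetry, and of Proposition \ref{p3} to (iii) to obtain transitivity; together these three structural properties say exactly that $R$ is an equivalence relation.

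There is no genuine obstacle in this argument, since all of the real work is carried by the three preceding propositions. The only point that deserves a moment's attention is that the three characterisations are logically independent, so that each of (i), (ii) and (iii) is doing its own job and none is redundant; this is precisely why all three conditions are required to force reflexivity, symmetry and transitivity simultaneously, rather than some proper subset sufficing. The worth of recording the result as a single theorem is therefore expository: it assembles the separate reflexive, symmetric and transitive characterisations into a single clean characterisation of equivalence relations, mirroring the way the classical Pawlak setting is recovered as the fully structured special case of the general relational generalisation.
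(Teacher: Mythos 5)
Your proposal is correct and matches the paper's own proof, which simply states that the theorem is an immediate corollary of Propositions \ref{p1}, \ref{p2} and \ref{p3}. You spell out the two directions of the biconditional slightly more explicitly, but the underlying argument is identical.
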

	
	\begin{proof} 
		This is an immediate corollary of Proposition \ref{p1}, Proposition \ref{p2} and Proposition \ref{p3}.
	\end{proof}  
	
	\noindent \textbf{Remark 2.3.2} We note that if we replace property (i) in the above theorem with the property 5. from Table 1, namely $ X \subseteq \up (X),$ then we get a similar alternative theorem.\\

	\section{Non-Dual Relational Generalisation of Rough\\  Set Approximations} 
	
	\vspace{2mm}
	Here we examine the properties of a non-dual coupling of lower and upper relational approximations.  Analogous to the definition given in equation (2.1), we now give the definition of a \emph{predecessor neighbourhood} of an element $x \in V,$ $R_p(x),$ as follows:
	\begin{equation}
	R_p(x) = \{ y \in U  | \ yRx \}.
	\end{equation}
	
	\noindent We  will use the lower and upper approximation definitions as follows:
	\begin{center}
		$\lo (X)  = \{ x\ | \ R_s(x) \subseteq X \} $
	\end{center}
	\begin{equation}
	\ut (X)  = \{ x \ | \ R_p(x) \cap X \neq \emptyset  \}
	\end{equation}
	
	\noindent To emphasize that this is a different upper approximation than the standard generalisation, we use a different font to denote the upper approximation, $\ut.$ In this case, the upper approximation of a set consists of all the successors of elements in that set instead of all the predecessors of elements in that set as in the standard generalisation. 
	
	Different properties follow for different special relations.  Again, let $r$, $s$, $t$ be subscripts which denote when a relation is reflexive, symmetric and transitive and respectively and their combinations denote the conjunction of these properties. Also, let the subscript $ser$ a serial relation. 
	 
	Consider Table 2.  Like before, properties 3,4, 8-13 hold for arbitrary relations. However, here we see that duality, property 1. of the table does not hold for arbitrary relations like it does for the standard relational generalisation. On the hand, $X \subseteq \lo (\ut (X))$ and $\ut (\lo (X) ) \subseteq X,$ properties 22. and 23. respectively, does hold for arbitrary relations unlike for the case of the standard relational generalisation. We can also see that the  $R_{rst}$ column, i.e. the column corresponding to  an equivalence relation satisfies all of the properties as expected. However,  here there is another column of interest which we would like to draw your attention to, namely the column corresponding to $R_{rt}$.  This corresponds to a pre-order and we observe that this satisfies all of the examined rough set properties except the duality of the lower and upper approximation operators. This feature makes it quite interesting and worthy of  further consideration.     
			
			In Section 2.2, we mentioned characterising properties  of the standard relational generalisation which imply $R$ is an equivalence relation. Here, we have characterising properties of the non-dual relational generalisation which imply that $R$ is a pre-order. 
			
			\newpage

	\begin{table}
		
		\begin{tabular}{| l | l | l | l | l | l | l | l | l |l |l }
			\hline
			\hfill & $R$ &$R_r$ & $R_s$ & $R_t$ & $R_{rs}$ & $R_{rt}$ & $R_{st}$ & $R_{rst}$ &$R_{ser}$ \\ \hline
			1. Duality of $ \lo(X) $, $\ut (X)$ & \ding{53} & \ding{53} & \ding{51} & \ding{53}  & \ding{51} & \ding{53}  & \ding{51} & \ding{51} & \ding{53} \\ \hline
			
			2. $\lo (\emptyset)$ = $\emptyset$&\ding{53} & \ding{51} & \ding{53} & \ding{53}  & \ding{51} & \ding{51}  & \ding{53} & \ding{51} & \ding{51} \\ \hline
			
			3. $\emptyset$ = $\ut (\emptyset) $ &\ding{51} & \ding{51} & \ding{51} & \ding{51}  & \ding{51} & \ding{51}  & \ding{51} & \ding{51} & \ding{51} \\ \hline
			
			4. $\lo (V) = V$ &\ding{51} & \ding{51} & \ding{51} & \ding{51}  & \ding{51} & \ding{51}  & \ding{51} & \ding{51} & \ding{51}  \\ \hline
			
			5. $\ut (V) = V$&\ding{53} & \ding{51} & \ding{53} & \ding{53}  & \ding{51} & \ding{51}  & \ding{53} & \ding{51} & \ding{53}  \\ \hline
			
			6. $\lo (X)\subseteq X$  &\ding{53} & \ding{51} & \ding{53} & \ding{53}  & \ding{51} & \ding{51}  & \ding{53} & \ding{51} & \ding{53}  \\ \hline
			
			7. $X\subseteq \ut (X)$&\ding{53} & \ding{51} & \ding{53} & \ding{53}  & \ding{51} & \ding{51}  & \ding{53} & \ding{51} & \ding{53}  \\ \hline
			
			8. $ X \subseteq Y \Rightarrow \lo (X) \subseteq \lo (Y)$&\ding{51} & \ding{51} & \ding{51} & \ding{51}  & \ding{51} & \ding{51}  & \ding{51} & \ding{51} & \ding{51} \\ \hline
			
			9. $ X \subseteq Y \Rightarrow \ut (X) \subseteq \ut (Y)$&\ding{51} & \ding{51} & \ding{51} & \ding{51}  & \ding{51} & \ding{51}  & \ding{51} & \ding{51} & \ding{51} \\ \hline

			10. $\ut (X\cup Y) = \ut (X) \cup \ut (Y)$ &\ding{51} & \ding{51} & \ding{51} & \ding{51}  & \ding{51} & \ding{51}  & \ding{51} & \ding{51} & \ding{51} \\ \hline
			
			11. $\lo (X\cap Y) = \lo (X) \cap \lo (Y) $&\ding{51} & \ding{51} & \ding{51} & \ding{51}  & \ding{51} & \ding{51}  & \ding{51} & \ding{51} & \ding{51} \\ \hline

			12. $\lo (X\cup Y)  \supseteq \lo (X) \cup \lo (Y) $ &\ding{51} & \ding{51} & \ding{51} & \ding{51}  & \ding{51} & \ding{51}  & \ding{51} & \ding{51} & \ding{51} \\ \hline
			
			13. $\ut (X\cap Y) \supseteq \ut (X) \cap \ut (Y) $ &\ding{51} & \ding{51} & \ding{51} & \ding{51}  & \ding{51} & \ding{51}  & \ding{51} & \ding{51} & \ding{51} \\ \hline

			14. $ \lo (\lo (X)) \subseteq \lo (X) $ &\ding{53} & \ding{51} & \ding{53} & \ding{53}  & \ding{51} & \ding{51}  & \ding{53} & \ding{51} & \ding{53}  \\ \hline
			
			15. $ \lo (\lo (X)) \supseteq  \lo (X) $ &\ding{53} & \ding{53} & \ding{53} & \ding{53}  & \ding{53} & \ding{51}  & \ding{53} & \ding{51} & \ding{53}  \\ \hline

			16. $ \ut (\lo (X)) \subseteq  \lo (X) $ &\ding{53} & \ding{53} & \ding{53} & \ding{51}  & \ding{53} & \ding{51}  & \ding{53} & \ding{51} & \ding{53} \\ \hline
			
			17. $ \ut (\lo (X)) \supseteq \lo (X) $ &\ding{53} & \ding{51} & \ding{53} & \ding{53}  & \ding{51} & \ding{51}  & \ding{53} & \ding{51} & \ding{53}  \\ \hline

			18. $ \ut (\ut (X)) \subseteq  \ut (X) $ &\ding{53} & \ding{53} & \ding{53} & \ding{51}  & \ding{53} & \ding{51}  & \ding{51} & \ding{51} & \ding{53} \\ \hline
			
			19. $ \ut (\ut (X)) \supseteq \ut (X) $ &\ding{53} & \ding{51} & \ding{53} & \ding{53}  & \ding{51} & \ding{51}  & \ding{53} & \ding{51} & \ding{53} \\ \hline

			20. $ \lo (\ut (X)) \subseteq  \ut (X)$ &\ding{53} & \ding{51} & \ding{53} & \ding{53}  & \ding{51} & \ding{51}  & \ding{53} & \ding{51} & \ding{53}  \\ \hline
			
			21. $ \lo (\ut (X)) \supseteq \ut (X) $ &\ding{53} & \ding{53} & \ding{53} & \ding{51}  & \ding{53} & \ding{51}  & \ding{51} & \ding{51} & \ding{53}
			\\ \hline
			
			22. $ X \subseteq \lo (\ut (X))$ &\ding{51} & \ding{51} & \ding{51} & \ding{51}  & \ding{51} & \ding{51}  & \ding{51} & \ding{51} & \ding{51} \\ \hline

			23 . $\ut (\lo (X)) \subseteq X$ &\ding{51} & \ding{51} & \ding{51} & \ding{51}  & \ding{51} & \ding{51}  & \ding{51} & \ding{51} & \ding{51}  \\ \hline
			
		\end{tabular} 
		\caption{Properties satisfied by the alternative general approximation operators for different special relations }
		\label{table: table2}
	\end{table}

	\begin{proposition} \label{p4}
		Let $V$ be a set and $R$ a relation on $V.$ Then $\ut (X) \subseteq \lo (\ut (X)) $ for all $X \subseteq V$ iff  $R$ is transitive.	
	\end{proposition}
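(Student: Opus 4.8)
The plan is to follow the pattern of Propositions \ref{p1}--\ref{p3}: treat the backward implication as a short direct relation-chase using transitivity, and handle the forward implication by contraposition with an explicit singleton witness set. First I would unpack the two nested operators so the relational content is visible. Membership $x \in \ut(X)$ means $R_p(x) \cap X \neq \emptyset$, i.e. there is some $w \in X$ with $wRx$; and membership $x \in \lo(\ut(X))$ means $R_s(x) \subseteq \ut(X)$, i.e. every $y$ with $xRy$ satisfies $R_p(y) \cap X \neq \emptyset$. So the inclusion $\ut(X) \subseteq \lo(\ut(X))$ unwinds to the statement: whenever some $w \in X$ has $wRx$ and $xRy$, there is some $z \in X$ with $zRy$.

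For the ($\Leftarrow$) direction, assume $R$ is transitive and take $x \in \ut(X)$, so there is $w \in X$ with $wRx$. To show $x \in \lo(\ut(X))$ I must verify $R_s(x) \subseteq \ut(X)$: for any $y$ with $xRy$, transitivity applied to $wRx$ and $xRy$ gives $wRy$, hence $w \in R_p(y) \cap X$ and so $y \in \ut(X)$. This is the straightforward half and needs only one application of transitivity.

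For the ($\Rightarrow$) direction I would argue by the contrapositive, exactly as in the earlier propositions. Suppose $R$ is not transitive, so there are witnesses $x, y, z \in V$ with $xRy$ and $yRz$ but $(x,z) \notin R$. I expect the correct test set to be the singleton $X = \{x\}$. The decisive observation is that $\ut(\{x\}) = \{a \mid xRa\} = R_s(x)$. Then $y \in \ut(\{x\})$ because $xRy$; but $z \in R_s(y)$ while $z \notin R_s(x) = \ut(\{x\})$ since $(x,z) \notin R$, so $R_s(y) \not\subseteq \ut(\{x\})$ and therefore $y \notin \lo(\ut(\{x\}))$. This exhibits an element of $\ut(\{x\})$ lying outside $\lo(\ut(\{x\}))$, which is exactly the failure of the inclusion, completing the contrapositive.

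The only genuine obstacle is guessing the right witness set in the converse; once $X = \{x\}$ is chosen and the identity $\ut(\{x\}) = R_s(x)$ is noted, the rest is routine verification and the non-transitivity triple $x, y, z$ drops straight into place as the counterexample. Everything else reduces to the definitional unfolding carried out in the first paragraph.
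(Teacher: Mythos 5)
Your proof is correct and follows essentially the same route as the paper: the backward direction is the same one-step application of transitivity, and the forward direction uses the identical contrapositive argument with the singleton witness set $\{x\}$ and the non-transitivity triple $x,y,z$. You additionally spell out the identity $\ut(\{x\}) = R_s(x)$ and the straightforward half that the paper omits, which is fine.
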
	
	
	\begin{proof}
		$\Leftarrow$ is straightforward so we prove the converse.	We prove it by the contrapositive. Suppose that $R$ is not transitive. Then there exists witnesses $x, y$ and $z \in V$ such that $(x,y), (y,z) \in R$ but $(x,z) \not\in R.$ Consider the set $Y = \{ x\}.$ Then since $(x,y) \in R, $ we have that $y \in \ut (Y)$ and since $z$ is a successor of $y$ but $(x,z) \not\in R$ then $y \not\in \lo (\ut (Y)).$ Hence, $\ut (Y) \not\subseteq \lo (\ut (X))$ and the result follows.
	\end{proof}

	\begin{theorem} 
		Let R be a relation on a set V. For all $X \subseteq V$, then
		
		\vspace{2mm}
		\noindent 	(i) $ \lo (X) \subseteq X $ and\\
		(ii)  $\ut (X) \subseteq \lo (\ut (X)) $ 
		
		\noindent  both hold iff $R$ is a pre-order. 
		
	\end{theorem}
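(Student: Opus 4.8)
The plan is to recognise that this theorem, exactly like Theorem 2.3 in the dual setting, is an immediate corollary of two independent characterisation results — one for each of the two axioms defining a pre-order. Since a pre-order is precisely a relation that is both reflexive and transitive, it suffices to argue that condition (i) is equivalent to reflexivity and condition (ii) is equivalent to transitivity, and then combine these two biconditionals.

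First I would observe that the lower approximation $\lo$ is defined identically in the standard and the non-dual generalisations: in both cases $\lo(X) = \{ x \mid R_s(x) \subseteq X\}$, built from the successor neighbourhood. Consequently Proposition \ref{p1} applies verbatim in the present section, giving that condition (i), namely $\lo(X) \subseteq X$ for all $X \subseteq V$, holds if and only if $R$ is reflexive. Next, condition (ii), $\ut(X) \subseteq \lo(\ut(X))$ for all $X \subseteq V$, is exactly the property characterised in Proposition \ref{p4}, which establishes that (ii) holds if and only if $R$ is transitive.

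The conjunction then follows with no further work: conditions (i) and (ii) hold simultaneously for all $X \subseteq V$ if and only if $R$ is both reflexive and transitive, that is, if and only if $R$ is a pre-order. I do not expect any genuine obstacle here, since the two conditions do not interact — each is a self-contained biconditional characterising one of the two defining axioms — and combining biconditionals of the form $P_i \Leftrightarrow Q_i$ into $(P_1 \wedge P_2) \Leftrightarrow (Q_1 \wedge Q_2)$ is a triviality of propositional logic. All the substantive content has already been discharged in Propositions \ref{p1} and \ref{p4}.
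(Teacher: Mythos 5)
Your proposal is correct and coincides with the paper's own proof, which likewise derives the theorem as an immediate corollary of Proposition \ref{p1} (characterising reflexivity via $\lo(X) \subseteq X$) and Proposition \ref{p4} (characterising transitivity via $\ut(X) \subseteq \lo(\ut(X))$). Your additional remark that $\lo$ is defined identically in both generalisations, so Proposition \ref{p1} applies verbatim, is a worthwhile observation the paper leaves implicit.
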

	
	\begin{proof}
		This is an immediate corollary of Proposition \ref{p1} and Proposition \ref{p4}.
	\end{proof}

	\subsection{Applications of the Non-Dual Relational Generalisation}
	\subsubsection{More general context for a special operator satisfying almost all rough set properties}

	An investigation suggested in \cite{PandC} asked the question why a certain $C_t$ operator defined in that paper, satisfies so many properties of rough approximation operators based on equivalence relations. Here, amongst other things, they considered covering generalisations of rough sets and they defined the neighbourhood of an element $x$ as all the the intersection of cover sets which contain $x.$ That is:
	\begin{definition} Let $\mathscr{C} = \{C_i: \ i \in I\}$ be a covering of $V$. Then a neighbourhood of a point $x \in V$ is given by:
		\begin{equation*}
		N(x) = \bigcap\limits_{i \in I} \{ C_i \in \mathscr{C} \ | \ x \in C_i\}.
		\end{equation*}
	\end{definition}
	
	\noindent 	Now we recall from that paper,  special lower and upper approximation operators, in their notation, $\underline{C_t}$ and $\overline{C_t},$ which satisfies all of their mentioned properties of approximation operators based on equivalence relations, except duality. They mentioned that this made this lower and upper approximation pair of operators worthy of further investigation. Here, we show that their operator is a special case of the non-dual relational generalisation which we examined in the previous section. First we recall their defined approximation operators below:
	\begin{definition} Let $V$ be the domain and for $x \in V,$  a set $D \subseteq V$ is said to be \emph{definable} if $D = \bigcup\limits_{x \in D} N(x).$ The collection of definable sets is denoted by,  $\mathfrak{D} = \{ D \subseteq V\ | \ D \text{ is definable}  \}.$ The lower and upper approximation operators, $\underline{C_t},$ $\overline{C_t}$, respectively, given in \cite{PandC} are as follows:
		\begin{equation*}
		\begin{split}
		\underline{C_t}(X) & = \bigcup \{ D \in \mathfrak{D} \ | \ D \subseteq X \}\\
		& = \bigcup \{ N(x) \ | \ N(x)  \subseteq X \},
		\end{split}
		\end{equation*}
		\begin{equation}
		\begin{split}
		\overline{C_t}(X)  & = \bigcap \{ D \subseteq \mathfrak{D} | \ X \subseteq D \} \\
		& = \bigcup \{ N(x) \ | \ x \in  X \}.
		\end{split}                           
		\end{equation}
		
	\end{definition}
	
	\noindent Next, we can see that definitions (2.11) of the lower and upper approximation operators is of the same form as (2.10) of the non-dual generalisation if we take $ R_s (x) = N(x)$. Observe that,
	\begin{equation*}
	\begin{split} 
	\lo (X) & = \{ x\ | \ R_s(x) \subseteq X \} \\
	& = \bigcup \{  R_s(x)  \ | \ R_s(x) \subseteq X \},
	\end{split}                    
	\end{equation*}	
	\begin{equation}
	\begin{split}
	\ut (X) & = \{ x \ | \ R_p(x) \cap X \neq \emptyset  \}\\
	& = \bigcup \{ R_s(x) \ | \ x \in X \}.
	\end{split}
	\end{equation}
	
	\noindent We can see that definition given by (11)  looks  similar to definition given by (9) by setting $N(x)$ as $R_s(x).$ However, in general $R_s(x)$ cannot be considered a neighbourhood of $x$ since we can show from Definition 2.4.1 that $N(x)$ seen as a relation on $V$ is reflexive and transitive, i.e. a pre-order. Hence, we consider the case of $R$ being a pre-order, i.e. $R_{rt},$ and we can set $ N(x) = R_{rt}(x).$  Then, we can use Table 2 to see which properties hold.  From the table we see that all of the usual rough set operator properties except duality holds for $R_{rt}.$ This accounts for the observation of Samanta and Chakraborty in \cite{PandC}  that the operators, $\underline{C_t}$ and $\overline{C_t}$ satisfy all of the rough set properties examined except duality. The reason is that, using the form of approximation operators given in Equation (9), any pre-order would satisfy at least those properties  satisfied by $R_{rt}$ in  Table 2.
	
	\subsubsection{Interpreted Logical Connection}
	
	Consider the case of the pre-order relation being an implication relation on a set of propositions $P.$ Then, the lower approximation of a subset of $P_1 \subseteq P$ say,  corresponds to the union of maximal theories contained in $P_1$, while the upper approximation corresponds to the smallest theory which contains $P_1$ i.e. its deductive closure.

	\section{Conclusion} We have documented and examined an interesting generalisation of rough sets as evidenced by the fact that it satisfies most of the usual rough set properties except for duality. A special case of this extension came up in the paper of  Samanta and Chakraborty, see \cite{PandC}, in which they observed that the respective operator in that paper satisfied almost  all of the considered properties and they remarked that it deserved further investigation.  In this direction, we gave at least the start of an explanation for their observation by placing it in this more general context. Then,  we showed an interpreted connection between this extension and certain logical notions. These type of links are especially helpful in forming a coherent map of the mass of existing literature out there. Furthermore, since this generalisation behaves quite nicely, it is interesting in its own right and worthy of further attention and study.

	\bibliographystyle{plain}
	
	\bibliography{mybib3b}

\end{document}